\documentclass[conference]{IEEEtran}
\IEEEoverridecommandlockouts

\usepackage{cite}
\usepackage{amsmath,amssymb,amsfonts,amsthm}
\usepackage{mathrsfs}
\usepackage{mdwmath}
\usepackage{mdwtab}
\usepackage{eqparbox}
\usepackage{cases}

\usepackage{graphicx}
\graphicspath{{../figures/}{./figures/}{./}}
\usepackage{epstopdf}
\usepackage{textcomp}
\usepackage{booktabs}
\usepackage{xcolor}
\usepackage{soul}
\usepackage{pifont}
\usepackage{subcaption}
\usepackage{balance}

\usepackage{algorithm,algorithmic}

\usepackage[english,vietnamese]{babel}

\usepackage{xspace}
\usepackage{hyperref}
\hypersetup{colorlinks=true,linkcolor=blue,citecolor=blue,urlcolor=blue}
\def\BibTeX{{\rm B\kern-.05em{\sc i\kern-.025em b}\kern-.08em
    T\kern-.1667em\lower.7ex\hbox{E}\kern-.125emX}}

\newtheorem{theorem}{Theorem}

\newcommand{\etal}{\textit{et al.}\xspace}
\makeatletter
\newcommand{\ScaleIfNeeded}{%
  \ifdim\Gin@nat@width>\linewidth\linewidth\else\Gin@nat@width\fi
}
\makeatother

\newcommand{\HeadCP}{\textsc{HeAD-CP}\xspace}
\newcommand{\DAPS}{\textsc{DAPS}\xspace}
\newcommand{\APS}{\textsc{APS}\xspace}
\newcommand{\SNAPS}{\textsc{SNAPS}\xspace}
\newcommand{\hsoftv}{h^{\mathrm{soft}}_v}
\newcommand{\hsoftbar}{\overline{h}^{\mathrm{soft}}}
\newcommand{\R}{\mathbb{R}}

\begin{document}
\selectlanguage{english}

\title{HeAD-CP: Heterophily-Aware Diffused Conformal Prediction Sets for Graph Neural Networks%
\thanks{Corresponding author: Nguyen Thai Anh (email: anh.nt@vlu.edu.vn).}%
}

\author{
\IEEEauthorblockN{Phan Binh Nguyen Lam\textsuperscript{1}, Nguyen Thai Anh\textsuperscript{2}}
\IEEEauthorblockA{
\textsuperscript{1}Faculty of Information Technology, University of Science, Vietnam National University, Ho Chi Minh City, Vietnam \\
\textsuperscript{2}Faculty of Information Technology, Van Lang School of Technology, Van Lang University, Ho Chi Minh City, Vietnam\\
Email: 24125014@student.hcmus.edu.vn, anh.nt@vlu.edu.vn}
}

\maketitle

\begin{abstract}
Conformal prediction (CP) provides distribution-free uncertainty quantification, and its extension to graphs is an active research direction. Diffused Adaptive Prediction Sets (\DAPS{}) is a widely used graph-aware diffusion baseline, propagating Adaptive Prediction Sets (\APS{}) non-conformity scores along edges with a uniform coefficient $\lambda$. We identify a fundamental shortcoming of this design: the uniform low-pass diffusion presupposes graph homophily and proves detrimental on heterophilic graphs, enlarging the mean prediction-set size by up to $10.6\%$ relative to plain \APS{}. To mitigate this, we propose \HeadCP{}, a family of node-wise diffusion variants whose coefficients are determined by a label-free local-homophily estimate derived from the GNN softmax. Three variants, namely signed-$\gamma$, edge-compatibility, and a \DAPS{}-baseline-with-correction, are most effective at extreme heterophily, intermediate heterophily, and moderate-to-high homophily, respectively, and all preserve the marginal coverage guarantee (Theorem~\ref{thm:coverage}). On ten benchmarks, the \HeadCP{} family stays at or below plain \APS{} on every dataset, while \DAPS{} exceeds \APS{} on six. The post-hoc oracle over the family improves over \DAPS{} on $8/10$ datasets at $p<0.01$ (paired Wilcoxon), with the largest gains on heterophilic graphs ($10.3\%$ on \texttt{texas}); on the two homophilic datasets where \DAPS{} still wins (\texttt{citeseer}, \texttt{pubmed}), it retains a marginal advantage of at most $0.002$, statistically insignificant on \texttt{citeseer} ($p=0.23$). Designing a calibrated label-free selector that approaches this oracle is the main outstanding empirical question.
\end{abstract}

\begin{IEEEkeywords}
conformal prediction, graph neural networks, heterophily, uncertainty quantification.
\end{IEEEkeywords}

\section{Introduction}\label{sec:intro}

Graph neural networks (GNNs) have become the de facto standard for node classification, yet their predictions are typically delivered as point estimates without any calibrated notion of uncertainty. This deficiency proves particularly consequential in high-stakes applications such as fraud detection, biomedical relation discovery, and risk assessment in financial networks. Conformal prediction (CP)~\cite{romano2020classification,sadinle2019least,angelopoulos2023conformal} redresses this gap in a model-agnostic manner, constructing prediction sets that, under exchangeability, are guaranteed post-hoc to contain the true label with probability at least $1-\alpha$. The appeal of CP lies in this finite-sample, distribution-free guarantee: it requires neither parametric assumptions nor any specialized retraining of the underlying model.

When CP is adapted to graphs, two designs have crystallized in the literature. Plain \APS{} treats nodes independently and disregards graph topology, thereby forfeiting whatever benefit the relational structure might confer. \DAPS{}~\cite{zargarbashi2023conformal}, in contrast, harnesses graph topology by diffusing per-class \APS{} scores along edges with a uniform coefficient $\lambda$, on the premise that adjacent nodes share similar uncertainty profiles, and it has since been adopted as an experimental baseline by \SNAPS{}~\cite{song2024similarity} and as a representative diffusion-based reference in subsequent graph-CP work~\cite{huang2023uncertainty,zhang2025residual}.

\paragraph*{Limitation of uniform diffusion.}
The \DAPS{} update is a one-hop low-pass graph filter that implicitly assumes adjacent nodes share similar non-conformity scores, which holds only under homophily. On heterophilic graphs the assumption fails: forcing each node's score toward the neighborhood mean injects label-irrelevant noise into the true-class score; the calibration threshold $\tau$ shifts upward to maintain coverage; and prediction sets must absorb additional classes. We empirically substantiate this: on six of ten benchmarks \DAPS{} produces sets up to $10.6\%$ larger than plain \APS{} (Fig.~\ref{fig:daps_hurts}). On a synthetic stochastic block model at $h{=}0.05$, \DAPS{} efficiency reaches $2.8\times$ that of \APS{}, signaling a near-catastrophic collapse precisely where graph topology should ideally aid rather than degrade calibration.

\paragraph*{Contributions.}
We propose \HeadCP{}, a family of heterophily-aware score-diffusion variants in which the diffusion coefficient $\gamma_v$ is derived per node from the GNN's own softmax output, which is precisely the source of information that \DAPS{} systematically discards. Three production variants together span the homophily spectrum:
\begin{itemize}\setlength\itemsep{2pt}
\item \HeadCP{}-signed: $\gamma_v = \gamma_{\max}(2 h_v -1)$ with hard pseudo-label homophily $h_v$, most effective on the most heterophilic datasets (\texttt{Roman-Empire}, \texttt{texas}, \texttt{wisconsin}).
\item \HeadCP{}-edge: per-edge compatibility $\rho(u,v) = (\langle p_u,p_v\rangle - 1/K)/(1-1/K)$, most effective on heterophilic graphs with mixed neighborhoods (\texttt{cornell}, \texttt{chameleon}, \texttt{squirrel}).
\item \HeadCP{}-v3: \DAPS{}-style baseline augmented with a per-node soft correction, most effective on moderate-to-high homophily graphs, exceeding \DAPS{} on \texttt{cora}, statistically tied with \DAPS{} on \texttt{citeseer} (numerical gap $\leq 0.0003$, $p=0.23$), and within $0.002$ on \texttt{pubmed}.
\end{itemize}

Our findings are threefold. First, all variants preserve marginal coverage (Theorem~\ref{thm:coverage}), regardless of the sign or magnitude of the per-node coefficient. Second, across $10$ benchmarks ($3$ GNN backbones $\times\,5$ seeds $\times\,100$ calibration--test partitions), the post-hoc oracle over the \HeadCP{} variants strictly outperforms \DAPS{} on $8/10$ datasets at $p<0.01$, ties on \texttt{citeseer} (gap $\leq 0.0003$, $p=0.23$), and falls $0.002$ short on \texttt{pubmed}. Whereas \DAPS{} hurts $6/10$ datasets relative to \APS{}, the oracle hurts none. A natural label-free selector based on the dataset-mean soft-homophily $\hsoftbar$ recovers $66.8\%$ of the oracle gain (paired Wilcoxon $p=0.21$), and a calibrated alternative is discussed in Section~\ref{sec:discussion}. Third, synthetic CSBM experiments corroborate the failure mechanism of Section~\ref{sec:intro}: \DAPS{} efficiency grows monotonically as homophily decreases, while \HeadCP{} stays essentially flat (Fig.~\ref{fig:eff_vs_h}).

\section{Related Work}\label{sec:related}

\paragraph*{Conformal prediction.}
Split conformal prediction~\cite{vovk2005algorithmic,lei2018distribution} converts any black-box classifier into a coverage-guaranteed set predictor via a single calibration pass. \APS{}~\cite{romano2020classification} and Threshold Prediction Sets (\textsc{TPS})~\cite{sadinle2019least} are the two non-conformity scores most widely used in tabular and image classification, but neither was originally designed with relational data in mind, and both treat each input independently.

\paragraph*{Conformal prediction for GNNs.}
Huang \etal\cite{huang2023uncertainty} introduce CF-GNN, which augments calibration with a topology-aware output-correction module. Zargarbashi \etal\cite{zargarbashi2023conformal} propose \DAPS{}, diffusing \APS{} scores via the one-hop filter $V \mapsto (1-\lambda) V + \lambda\, \overline{V}_{\mathcal{N}}$. Their Theorem~2 establishes an approximation-error improvement under an explicit homophily assumption, while the empirical evaluation, conducted only on homophilic citation and co-purchase graphs, briefly notes that ``diffusion relies on homophily'' but does not test \DAPS{} on standard heterophilic benchmarks or characterize whether it can harm efficiency. This is the gap that \HeadCP{} closes. \SNAPS{}~\cite{song2024similarity} aggregates scores via feature similarity and structural neighborhoods (with \DAPS{} as a baseline), while RR-GNN~\cite{zhang2025residual} combines Mondrian CP with a residual-adaptive secondary GNN. To our knowledge, the heterophily failure mode we identify, and a coverage-preserving signed-diffusion alternative, have not previously been characterized.

\paragraph*{Heterophily and other UQ.}
Heterophily-aware GNNs~\cite{pei2020geom,zhu2020beyond,bo2021beyond,chien2020adaptive} and the heterophilous benchmarks of~\cite{lim2021large} establish heterophily as a first-class consideration, and FAGCN and GPRGNN, in particular, mix or learn between low- and high-pass filters. \HeadCP{} is the conformal-prediction analogue: $\gamma_v\in(-\gamma_{\max},+\gamma_{\max})$ smoothly interpolates between low- and high-pass diffusion, and v3 overlays this on a fixed \DAPS{} baseline. Parallel UQ work~\cite{stadler2021graph,wang2021confident} delivers calibrated probabilities but not finite-sample distribution-free coverage, and \HeadCP{} is therefore complementary.

\section{Methodology}\label{sec:method}

\subsection{Background and Notation}
Let $G=(V,E)$ be an undirected graph with adjacency $A\in\{0,1\}^{|V|\times|V|}$, features $X\in\R^{|V|\times d}$, and labels $Y\in\{1,\ldots,K\}^{|V|}$. Let $\mathcal{N}(v) = \{u : A_{uv}=1\}$ and let $f_\theta$ produce softmax output $p_v\in\Delta^{K-1}$ for each node. Let $\pi_v$ be a permutation of class indices satisfying $p_{v,\pi_v(1)}\geq\cdots\geq p_{v,\pi_v(K)}$, and $r_v(y)$ the rank of $y$ in this ordering. The randomized \APS{} score~\cite{romano2020classification} is
\begin{equation}
s(v,y) \;=\; \sum_{j=1}^{r_v(y)-1} p_{v,\pi_v(j)} \;+\; U_{v,y}\, p_{v,y},
\label{eq:aps}
\end{equation}
where $\{U_{v,y}\}$ are i.i.d.\ $\mathrm{Unif}(0,1)$, drawn independently of all labels. Following~\cite{vovk2005algorithmic,zargarbashi2023conformal}: given calibration set $\mathcal{C}$, set $k = \lceil(|\mathcal{C}|+1)(1-\alpha)\rceil$ and let $\tau$ be the $k$-th smallest of $\{s(v,y_v): v\in\mathcal{C}\}$. The prediction set is $C(v) = \{y : s(v,y)\leq\tau\}$.

\paragraph*{Isolated nodes.}
If $|\mathcal{N}(v)|=0$, we set $\overline{s}_{\mathcal{N}}(v,y):=s(v,y)$ and $\gamma_v:=0$ (and analogously skip the edge sum in \eqref{eq:edge}), so isolated nodes incur no diffusion. This convention applies uniformly to \DAPS{} and to all \HeadCP{} variants.

\subsection{The DAPS Diffusion}
\DAPS{} replaces the original score $s(v,y)$ with the diffused score
\[
s'(v,y) = (1-\lambda)\, s(v,y) + \lambda\, \overline{s}_{\mathcal{N}}(v, y),
\]
where $\overline{s}_{\mathcal{N}}(v, y) := |\mathcal{N}(v)|^{-1}\sum_{u\in\mathcal{N}(v)} s(u, y)$ denotes the per-class neighbor-mean score\footnote{Theorem~2 of~\cite{zargarbashi2023conformal} is stated in probability space while the deployed algorithm operates on scores, and we likewise operate on scores.}, and $\lambda$ is a global hyperparameter (typically $\lambda{=}0.5$). This update is a one-hop low-pass filter applied uniformly across nodes, irrespective of whether each neighborhood is label-homogeneous or sharply mixed, and this uniformity is the source of the failure mode we address.

\subsection{The HeAD-CP Family}\label{subsec:headcp}

\paragraph*{Signed variant.}
Let $\hat{y}_v = \arg\max_k p_{v,k}$ denote the GNN pseudo-label, and define the Laplace-smoothed local homophily (smoothed toward $0.5$, which corresponds to the heterophily/homophily neutral point at which the signed coefficient vanishes) as
\begin{equation}
h_v = \frac{|\{u\in\mathcal{N}(v): \hat{y}_u = \hat{y}_v\}| + 1}{|\mathcal{N}(v)| + 2}.
\label{eq:hard_h}
\end{equation}
The signed update is then
\begin{equation}
s'(v,y) = (1-\gamma_v)\, s(v,y) + \gamma_v\, \overline{s}_{\mathcal{N}}(v,y),\quad \gamma_v = \gamma_{\max}(2h_v - 1).
\label{eq:signed}
\end{equation}
For homophilic nodes ($h_v\to 1$), $\gamma_v \to +\gamma_{\max}$ and the update recovers low-pass behavior. For heterophilic nodes ($h_v\to 0$), $\gamma_v \to -\gamma_{\max}$ and the update becomes high-pass: scores are pushed away from the neighborhood mean, counteracting the misleading signal that a uniform low-pass filter would inject. The Laplace smoothing toward $0.5$ ensures that nodes with very small neighborhoods default to no diffusion, while permitting $\gamma_v$ to approach its extremes only with substantial supporting evidence.

\paragraph*{Edge-compatibility variant.}
The signed variant assigns a single scalar coefficient per node and cannot discriminate between heterogeneous neighbors within the same neighborhood. Since heterophily is in general an edge-level rather than node-level phenomenon, we introduce an edge-level compatibility
\begin{equation}
\rho(u,v) = \frac{\langle p_u,p_v\rangle - 1/K}{1-1/K}
\in \left[-\tfrac{1}{K-1},\, 1\right],
\label{eq:compatibility}
\end{equation}
where $\rho=+1$ when both endpoints predict the same class with full confidence, $\rho=-1/(K-1)$ (which equals $-1$ in the binary case) when both endpoints are confident on distinct classes, and $\rho=0$ when at least one endpoint is uniform. The corresponding diffusion is
\begin{equation}
s'(v,y)= s(v,y) + \frac{\eta}{\lvert\mathcal{N}(v)\rvert}
\sum_{u\in\mathcal{N}(v)} \rho(u,v)\,(s(u,y)-s(v,y)),
\label{eq:edge}
\end{equation}
where $\eta\in[0,1]$ controls the edge-diffusion strength (we set $\eta = \gamma_{\max}$). When $\rho(u,v)>0$, the term $\rho(u,v)(s(u,y)-s(v,y))$ pulls $s(v,y)$ toward $s(u,y)$ (low-pass over that edge), and when $\rho(u,v)<0$, it pushes $s(v,y)$ away (high-pass), enabling the variant to handle neighborhoods of mixed homophily.

\paragraph*{DAPS-baseline variant.}
The signed variant tends to under-diffuse on uniformly homophilic graphs because, even there, the local homophily $h_v$ rarely attains its theoretical maximum. To address this, we combine a fixed \DAPS{}-style baseline with a soft-homophily, confidence-gated correction, yielding the update
\begin{equation}
\begin{aligned}
s'(v,y) &= (1-\gamma_v)\,s(v,y)
+ \gamma_v\,\overline{s}_{\mathcal{N}}(v,y), \\
\gamma_v &= \gamma_{\text{base}}
+ \gamma_{\text{var}} \cdot \tilde{h}_v
\cdot \mathrm{conf}_v ,
\end{aligned}
\label{eq:v3}
\end{equation}
where $\tilde{h}_v = (\hsoftv - 1/K)/(1-1/K)$ with $\hsoftv = |\mathcal{N}(v)|^{-1}\sum_{u\in\mathcal{N}(v)}\langle p_u,p_v\rangle$, and $\mathrm{conf}_v = (\max_k p_{v,k} - 1/K)/(1-1/K)$ is the (centered) top-1 confidence of node $v$. The factor $\mathrm{conf}_v$ functions as a credibility gate: when the GNN is uncertain about $v$, the homophily-driven correction is attenuated. Setting $\gamma_{\text{var}}=0$ recovers \DAPS{}, while setting $\gamma_{\text{base}}=0$ yields a soft analogue of the signed variant, which we report as \HeadCP{}-v2 in Table~\ref{tab:main} to isolate the contribution of the \DAPS{} baseline component.

\paragraph*{Range of the diffused score.}
Whenever $\gamma_v<0$ in \eqref{eq:signed} or $\rho(u,v)<0$ in \eqref{eq:edge}, the diffused score $s'(v,y)$ may fall outside $[0,1]$. This is harmless for conformal calibration: the split-CP guarantee requires only an ordered, real-valued non-conformity score~\cite{vovk2005algorithmic,romano2020classification} and is unaffected by the range of $s'$. Coverage therefore holds without any clipping or post-processing (cf.\ Theorem~\ref{thm:coverage}).

\section{Coverage Guarantee}\label{sec:theory}

\begin{theorem}\label{thm:coverage}
Condition on the unlabeled graph $(X,A)$, the trained GNN $f_\theta$, and, if randomized \APS{} is used, on the randomization variables $\{U_{v,y}\}_{v\in V,\,y\in[K]}$, all of which are computed without access to any label in the calibration--test pool. Define the \HeadCP{} non-conformity score $s'$ via any of \eqref{eq:signed}, \eqref{eq:edge}, or \eqref{eq:v3}, where every auxiliary quantity ($\gamma_v$, $\rho(u,v)$, $h_v$, $\hsoftv$, $\mathrm{conf}_v$) is a deterministic function of $(X,A,f_\theta)$ and uses no label in the pool. Let $\mathcal{C}$ and $\{v^*\}$ be obtained by a uniformly random split of the pool, and calibrate $\tau$ on $\mathcal{C}$ as described in Section~\ref{sec:method}. Then for any $\alpha\in(0,1)$,
\begin{equation*}
\Pr\!\left[Y_{v^*}\in C(v^*)\right] \;\geq\; 1-\alpha.
\end{equation*}
\end{theorem}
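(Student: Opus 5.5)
The plan is to reduce the claim to the standard split-conformal exchangeability argument. The only subtlety is that the diffused score $s'(v,y)$ depends on neighbors' base scores and on other nodes' softmax outputs. So the proof hinges on showing that, once we condition on $(X,A,f_\theta,\{U_{v,y}\})$, the map $(v,y)\mapsto s'(v,y)$ is a fixed deterministic function. This function must not depend on any label in the pool, nor on which nodes are assigned to $\mathcal{C}$ versus the test point.

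\emph{Step 1: $s'$ is a fixed function.} By \eqref{eq:aps}, $s(u,y)$ is a function of $p_u$ and $U_{u,y}$ alone. Here $p_u = f_\theta(X,A)_u$, and the label entering $s(u,y)$ is the argument $y$, not $Y_u$. Hence $s(u,y)$ is fixed for every node $u\in V$ and class $y\in[K]$, including calibration and test neighbors of $v$. The neighbor mean $\overline{s}_{\mathcal{N}}(v,y)$ is built from these quantities and $A$. By hypothesis the coefficients $\gamma_v$ (via $h_v$, $\hsoftv$, $\mathrm{conf}_v$) and $\rho(u,v)$ are deterministic functions of $(X,A,f_\theta)$, and so is the isolated-node convention. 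Plugging these into \eqref{eq:signed}, \eqref{eq:edge}, or \eqref{eq:v3} gives a fixed real-valued table $S'(v,y)$ for all $v\in V$ and $y\in[K]$. This holds regardless of the sign of $\gamma_v$ or $\rho$, and regardless of whether values leave $[0,1]$.

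\emph{Step 2: exchangeability.} Conditional on the above, the only remaining randomness is the uniformly random split of the pool into $\mathcal{C}$ and $v^*$, together with whatever randomness the labels carry. The pool labels are held fixed under the conditioning (transductive setting). The scores $Z_v := S'(v,Y_v)$ for $v$ in $\mathcal{C}\cup\{v^*\}$ are therefore fixed numbers evaluated at uniformly randomly assigned positions. Consequently $(Z_v)_{v\in\mathcal{C}\cup\{v^*\}}$ is exchangeable: the position of $v^*$ is uniform among these $n+1$ points, where $n=|\mathcal{C}|$. If the split draws $\mathcal{C}$ and $v^*$ as a uniformly random subset of a larger pool, I will first condition on the unordered set $\mathcal{C}\cup\{v^*\}$. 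Within that set the test index is uniform, and the conclusion then averages over the choice of set.

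\emph{Step 3: quantile lemma.} Note that $Y_{v^*}\in C(v^*)$ if and only if $Z_{v^*}\le\tau$, where $\tau$ is the $k$-th smallest of the $n$ calibration scores with $k=\lceil (n+1)(1-\alpha)\rceil$. The event $Z_{v^*}>\tau$ implies that $Z_{v^*}$ is strictly larger than at least $k$ of the other values. Since ties only help the bound, its rank among the $n+1$ values is at least $k+1$. By uniformity of the test position, the probability that the test point's rank is at least $k+1$ is at most $(n+1-k)/(n+1)\le\alpha$. If $k>n$, we take $\tau=+\infty$ and coverage is trivial. Integrating over the conditioning variables then yields the unconditional bound.

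\emph{Main obstacle.} The delicate point is Step 1. Graph diffusion couples calibration and test scores, because $v^*$'s score uses calibration neighbors' base scores and vice versa. One must check that this coupling passes only through label-free quantities $s(u,\cdot)$, which are evaluated at every class rather than at $Y_u$, and never through $s(u,Y_u)$ or through which role $u$ plays in the split. Once this is verified (as in the DAPS argument of \cite{zargarbashi2023conformal}), the permutation-invariance of the score table under relabeling of the split follows, and the rest is the textbook argument of \cite{vovk2005algorithmic}.
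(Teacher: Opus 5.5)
Your proposal is correct and follows the paper's proof sketch: you condition on the label-free $\sigma$-algebra so that the diffused score table $s'(v,y)$ is fixed, you observe that the diffusion couples nodes only through $s(u,\cdot)$ evaluated at the queried class, and you then use the uniform position of the test node under the random split. Your explicit handling of ties (at most $n+1-k$ indices lie strictly above the $k$-th order statistic) and of the case $k>n$ is slightly more careful than the paper, which asserts that the rank is exactly uniform; that claim holds only when there are no ties.
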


\begin{proof}[Proof sketch]
Condition on $(X,A)$, $f_\theta$, the APS randomization $\{U_{v,y}\}$, and the score functions $\{s'(v,\cdot)\}_{v\in V}$. By construction none of these depends on any pool label: $f_\theta$ trains on a disjoint TRAIN split. The auxiliary quantities $\gamma_v, \rho(u,v), h_v, \hsoftv, \mathrm{conf}_v$ are derived from $\{p_v\}$ alone. $\{U_{v,y}\}$ is sampled independently of all labels. For each $v$ the diffused score $s'(v,y)$ uses pool labels only through the queried class $y$.

Conditional on this $\sigma$-algebra and the labels $\{Y_v\}_{v\in\mathcal{C}\cup\{v^*\}}$, the realized labeled scores are deterministic numbers. The only remaining randomness is the uniform split of the pool, under which each of the $|\mathcal{C}|+1$ pool nodes is equally likely to be $v^*$. Hence the rank of $s'(v^*,Y_{v^*})$ is uniform on $\{1,\ldots,|\mathcal{C}|+1\}$, and the standard split-CP argument~\cite{vovk2005algorithmic,romano2020classification} gives $\Pr[Y_{v^*}\in C(v^*)]\geq 1-\alpha$. The conclusion is unaffected by the sign or magnitude of $\gamma_v$.
\end{proof}

Theorem~\ref{thm:coverage} follows from standard split-CP exchangeability, and we state it explicitly to verify that the argument extends to negative $\gamma_v$ (high-pass) and to the per-cell APS noise $\{U_{v,y}\}$ shared between calibration and test. An efficiency bound on the prediction-set size requires further structural assumptions and is left for future work.

\section{Experiments}\label{sec:experiments}

\begin{figure*}[!t]
\centering
\includegraphics[width=0.85\textwidth]{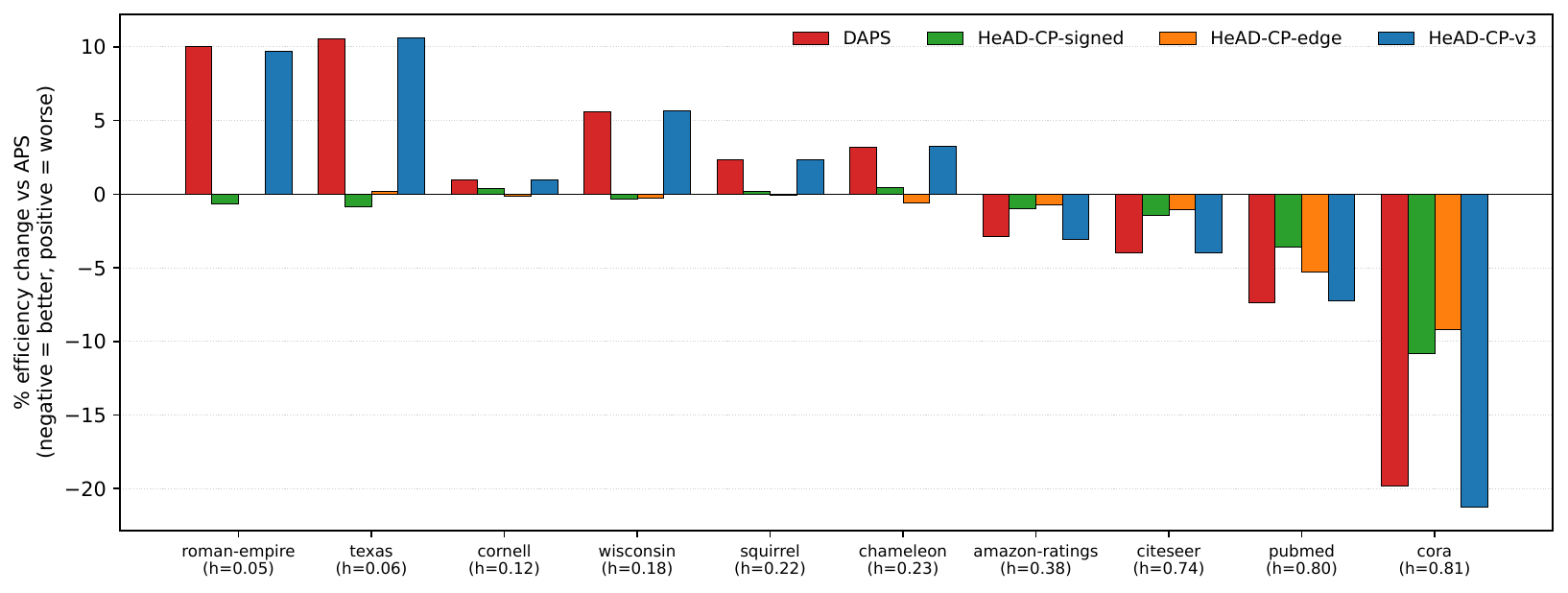}
\caption{\textbf{Per-dataset percentage efficiency change relative to \APS{}} (negative $=$ better). \DAPS{} (red) and \HeadCP{}-v3 (blue) inflate sizes on heterophilic graphs (v3 inherits the failure since it carries a \DAPS{}-style baseline). \HeadCP{}-signed (green) covers the lowest-homophily datasets (\texttt{Roman-Empire}, \texttt{texas}, \texttt{wisconsin}), while \HeadCP{}-edge (orange) covers the mid-heterophily datasets (\texttt{cornell}, \texttt{chameleon}, \texttt{squirrel}). The production-family oracle therefore stays at or below \APS{} on every dataset. The v2 ablation is plotted in Fig.~\ref{fig:all_methods}.}
\label{fig:daps_hurts}
\end{figure*}

\begin{table*}[!t]
\centering
\caption{\textbf{Mean prediction-set efficiency on $10$ benchmarks} (lower is better), averaged over $3$ GNNs $\times\, 5$ seeds $\times\, 100$ partitions. Marginal coverage $\geq 0.9$ in all cells. \textbf{Bold}: row-wise post-hoc oracle. ``Best'' column: variant attaining the row minimum. $\Delta_{\!\text{vs APS}} = (\DAPS-\APS)/\APS\times 100\%$ (positive $=$ \DAPS{} hurts). \HeadCP{}-v2 ($\gamma_{\text{base}}{=}0$ in v3) is an ablation column, not a production variant. On \texttt{citeseer}, the $0.0003$ gap between \DAPS{} ($2.5205$) and \HeadCP{}-v3 ($2.5208$) is not significant ($p=0.23$). A selector on $\hsoftbar$ recovers $66.8\%$ of the oracle gap (Section~\ref{sec:discussion}).}
\label{tab:main}
\small
\setlength{\tabcolsep}{4pt}
\begin{tabular}{lccccccccc}
\toprule
Dataset & $h$ & \APS & \DAPS & \HeadCP{}-signed & \HeadCP{}-v2 & \HeadCP{}-v3 & \HeadCP{}-edge & Best & $\Delta_{\!\text{vs APS}}$ \\
\midrule
\texttt{Roman-Empire} & 0.05 & 8.544 & 9.401 & \textbf{8.489} & 8.537 & 9.374 & 8.541 & signed    & $+10.0\%$ \\
\texttt{texas}        & 0.06 & 2.879 & 3.183 & \textbf{2.855} & 2.879 & 3.184 & 2.884 & signed    & $+10.6\%$ \\
\texttt{cornell}      & 0.12 & 3.880 & 3.917 & 3.895 & 3.877 & 3.918 & \textbf{3.874} & edge      & $+1.0\%$ \\
\texttt{wisconsin}    & 0.18 & 2.546 & 2.688 & \textbf{2.538} & 2.538 & 2.690 & 2.538 & signed    & $+5.6\%$ \\
\texttt{squirrel}     & 0.22 & 3.956 & 4.048 & 3.965 & 3.955 & 4.048 & \textbf{3.952} & edge      & $+2.3\%$ \\
\texttt{chameleon}    & 0.23 & 3.039 & 3.137 & 3.054 & 3.034 & 3.139 & \textbf{3.022} & edge      & $+3.2\%$ \\
\texttt{Amazon-Ratings} & 0.38 & 3.624 & 3.520 & 3.588 & 3.596 & \textbf{3.512} & 3.597 & v3        & $-2.9\%$ \\
\texttt{citeseer}     & 0.74 & 2.625 & \textbf{2.521} & 2.587 & 2.612 & 2.521 & 2.597 & \DAPS     & $-4.0\%$ \\
\texttt{pubmed}       & 0.80 & 1.558 & \textbf{1.443} & 1.502 & 1.498 & 1.445 & 1.475 & \DAPS     & $-7.4\%$ \\
\texttt{cora}         & 0.81 & 1.890 & 1.515 & 1.686 & 1.788 & \textbf{1.488} & 1.716 & v3        & $-19.8\%$ \\
\bottomrule
\end{tabular}
\end{table*}

\begin{figure*}[!t]
\centering
\includegraphics[width=0.85\textwidth]{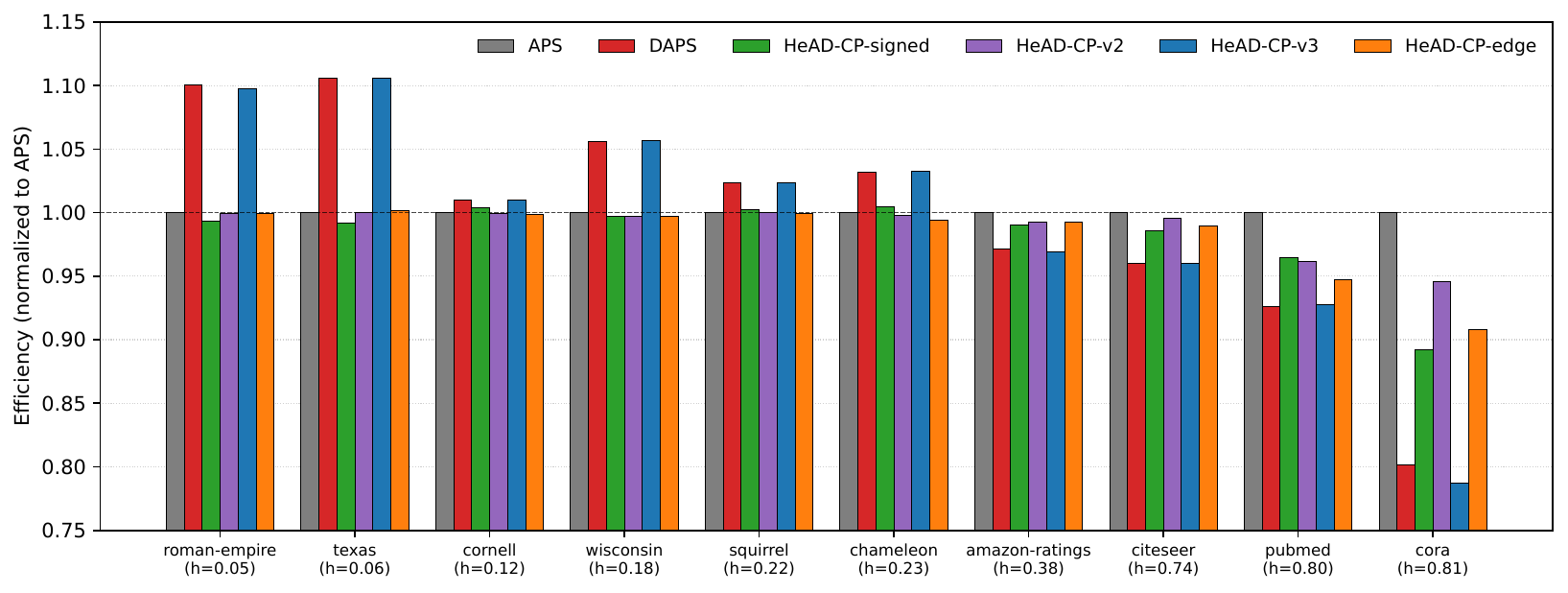}
\caption{\textbf{Normalized efficiency ($\APS{}=1.0$) across $10$ datasets} (lower is better). \DAPS{} (red) and \HeadCP{}-v3 (blue) cross $1.0$ on heterophilic data because v3 carries a \DAPS{}-style baseline. \HeadCP{}-signed (green) is below $1.0$ at extreme heterophily, while \HeadCP{}-edge (orange) covers the mid-heterophily datasets (\texttt{cornell}, \texttt{squirrel}, \texttt{chameleon}) where signed slightly exceeds $1.0$. The \HeadCP{}-v2 (purple) ablation never strictly attains the row minimum in Table~\ref{tab:main}, so the production oracle over \{signed, v3, edge\} is uniformly $\leq 1.0$ and the label-free selector of Section~\ref{sec:discussion} approximates it qualitatively.}
\label{fig:all_methods}
\end{figure*}

\subsection{Experimental Setup}
We evaluate \HeadCP{} on $10$ standard node-classification benchmarks: three citation graphs (\texttt{cora}, \texttt{citeseer}, \texttt{pubmed}, all homophilic), three WebKB graphs from Geom-GCN~\cite{pei2020geom} (\texttt{texas}, \texttt{wisconsin}, \texttt{cornell}), two Wikipedia graphs (\texttt{chameleon}, \texttt{squirrel})\footnote{\cite{platonov2023critical} note these splits suffer from leakage, but we retain them for direct comparability with prior graph-CP work~\cite{zargarbashi2023conformal,song2024similarity} and additionally evaluate on the leakage-free Platonov benchmarks.}, and two Platonov benchmarks~\cite{platonov2023critical} (\texttt{Roman-Empire}, \texttt{Amazon-Ratings}). Edge-homophily $h$ ranges from $0.05$ to $0.81$ (Table~\ref{tab:main}). We use three GNN backbones: GCN~\cite{kipf2016semi}, GAT~\cite{velivckovic2017graph}, and GraphSAGE~\cite{hamilton2017inductive}. For each (dataset, model, seed) we generate $100$ random $50/50$ calibration--test partitions, yielding $15{,}000$ CP runs per method at $\alpha = 0.1$. All hyperparameters are fixed at $\lambda=\gamma_{\max}=\eta=\gamma_{\text{base}}=\gamma_{\text{var}}=0.5$ to eliminate per-dataset tuning bias, and Fig.~\ref{fig:ablation} confirms signed \HeadCP{} is robust over $\gamma_{\max}\in[0.1, 0.9]$.

\subsection{Coverage Sanity Check}
Marginal coverage meets or exceeds the target $1-\alpha=0.9$ in every (dataset, model, method) configuration ($180/180$ rows pass the $\geq 0.895$ tolerance, with minimum observed $0.900$). Coverage is therefore preserved both empirically and theoretically (Theorem~\ref{thm:coverage}).

\subsection{Synthetic CSBM Experiments}
To isolate the diffusion-direction hypothesis (low-pass vs.\ high-pass), we sweep edge homophily $h \in \{0.05, 0.15,\ldots,0.95\}$ on a CSBM~\cite{deshpande2018contextual} with $K{=}3$, $n{=}1500$, and a logistic-regression base classifier, and the signed variant differs from \DAPS{} only in the sign of $\gamma_v$, so it provides the cleanest test of this single hypothesis. The edge and v3 variants add per-edge weighting and a confidence-gated correction, respectively, both driven by pairwise inner products $\langle p_u, p_v\rangle$, and under our node-independent base classifier $p_v$ depends on $X_v$ alone, so $\langle p_u, p_v\rangle$ collapses to a function of $(X_u, X_v)$ that carries no neighborhood information beyond the label co-occurrence that the hard pseudo-label $h_v$ already captures. Edge and v3 are therefore evaluated on the real-data benchmarks of Section~\ref{subsec:realdata}, where GNN softmax outputs propagate neighborhood structure into $p_v$ and the inner products become non-degenerate. In Fig.~\ref{fig:eff_vs_h}, \DAPS{} efficiency increases monotonically as $h$ decreases, reaching $2.49$ at $h{=}0.05$ ($2.8\times$ the \APS{} value of $0.90$), while signed \HeadCP{} stays essentially flat, confirming the failure mechanism of Section~\ref{sec:intro}.

\begin{figure}[t]
\centering
\includegraphics[width=\columnwidth]{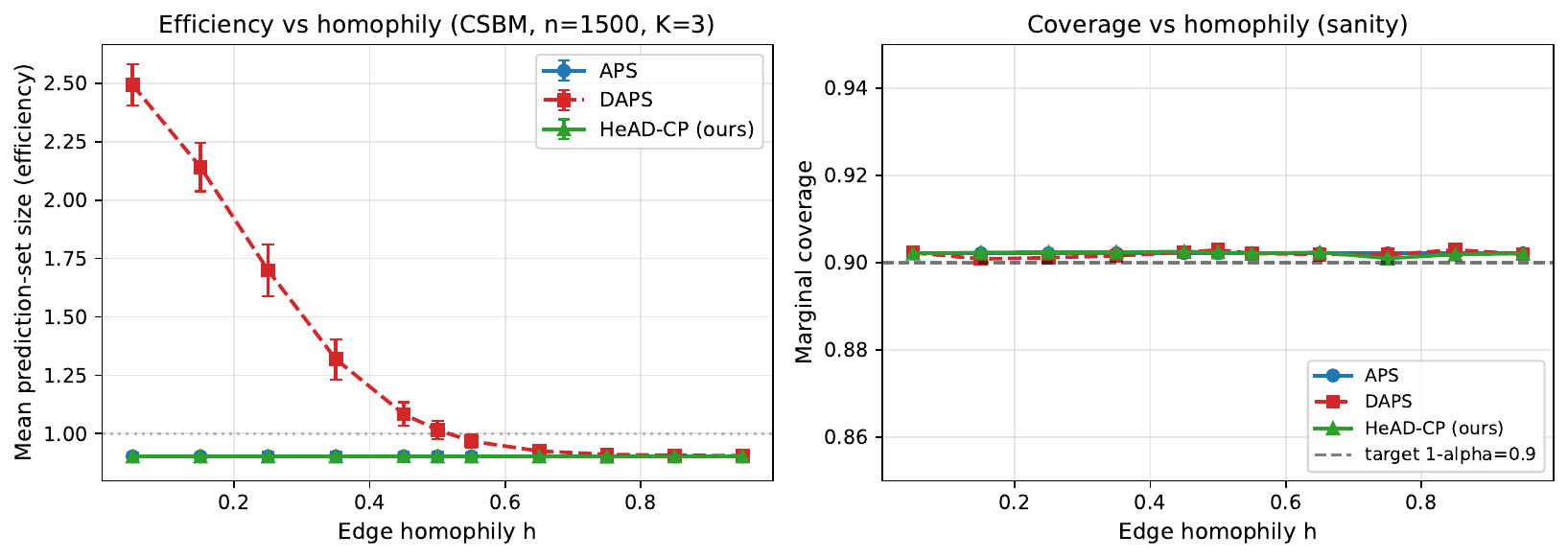}
\caption{\textbf{Synthetic CSBM mechanism confirmation.} Comparison of \APS{}, \DAPS{}, and signed \HeadCP{}. Marginal coverage (right) is preserved by all methods, while efficiency (left) reveals that \DAPS{} catastrophically inflates prediction-set sizes on heterophilic graphs, whereas signed \HeadCP{} circumvents the failure.}
\label{fig:eff_vs_h}
\end{figure}

\subsection{Real-Data Benchmarks}\label{subsec:realdata}
Table~\ref{tab:main} reports mean efficiency across the $10$ datasets (row minima in \textbf{bold}). Fig.~\ref{fig:all_methods} normalizes each row by \APS{}: \DAPS{} and \HeadCP{}-v3 cross $1.0$ on heterophilic graphs, while the signed and edge variants stay below $1.0$ across the heterophilic regime, so the post-hoc oracle over the production family is uniformly $\leq 1.0$.

\paragraph*{DAPS hurts on the majority of datasets.}
The column $\Delta_{\!\text{vs APS}}$ in Table~\ref{tab:main} is positive on six of the ten datasets, indicating that \DAPS{} enlarges prediction sets relative to plain \APS{} whenever homophily falls below approximately $0.4$, with the worst case at $+10.6\%$ on \texttt{texas}. The pattern aligns with the mechanism of Section~\ref{sec:intro}: as homophily decreases, the inflation of $\tau$ becomes increasingly severe.

\paragraph*{The post-hoc oracle attains the row minimum on 8 of 10 datasets.}
On the remaining two homophilic datasets, \DAPS{} retains a marginal numerical edge: $0.0003$ on \texttt{citeseer} (statistically tied at $p=0.23$) and $0.002$ on \texttt{pubmed}. Variant dominance broadly tracks homophily (Table~\ref{tab:main}, ``Best'' column) and aligns qualitatively with $\hsoftbar$. Quantitative selector results are discussed in Section~\ref{sec:discussion}.

\paragraph*{Statistical significance.}
For each (model, seed) replicate we compare the most efficient \HeadCP{} variant against \DAPS{} via paired Wilcoxon signed-rank tests ($15$ pairs per dataset). The oracle beats \DAPS{} at $p<0.01$ on $8/10$ datasets, ties on \texttt{citeseer} ($p=0.23$), and is short by $0.002$ on \texttt{pubmed}. Pooling all $150$ triplets, the oracle wins $75.3\%$ of pairs ($p<10^{-17}$). The same $8/10$ remain significant under Benjamini--Hochberg at $\alpha=0.01$ ($7/10$ under Holm--Bonferroni). This is a post-hoc oracle and therefore an upper bound on what the label-free selector of Section~\ref{sec:discussion} can deliver.

\subsection{Ablation: Diffusion Strength}
Fig.~\ref{fig:ablation} sweeps the diffusion-strength hyperparameter on four representative datasets (\texttt{cora}, \texttt{chameleon}, \texttt{squirrel}, \texttt{texas}), comparing \DAPS{} ($\lambda$) against signed \HeadCP{} ($\gamma_{\max}$). \DAPS{} has a sweet spot at $\lambda \approx 0.5$ for homophilic data and is monotonically harmful for heterophilic data, with harm escalating sharply as $\lambda\to 1$. Signed \HeadCP{} displays a wide flat plateau on every dataset, which is a property of practical value, since production deployments rarely admit per-dataset tuning.

\begin{figure}[t]
\centering
\includegraphics[width=\columnwidth]{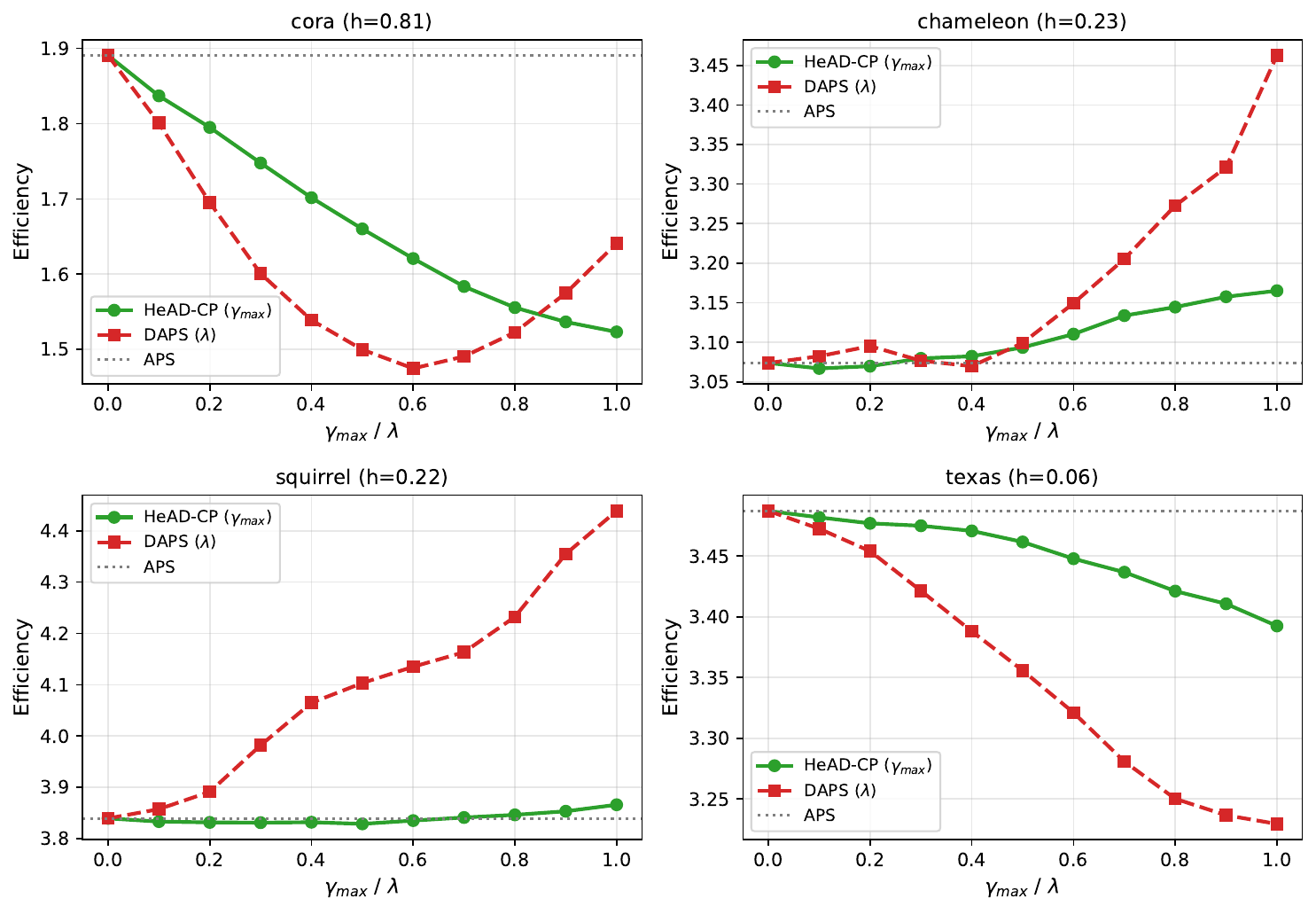}
\caption{\textbf{Diffusion-strength ablation: \DAPS{} ($\lambda$) versus signed \HeadCP{} ($\gamma_{\max}$).} \DAPS{} efficiency degrades sharply with $\lambda$ on heterophilic graphs, while the signed \HeadCP{} variant is robust to $\gamma_{\max}$ across all homophily levels.}
\label{fig:ablation}
\end{figure}

\section{Discussion}\label{sec:discussion}

\paragraph*{Toward a label-free selector.}
A practical deployment requires a rule that picks one variant from $(X, A, f_\theta)$ alone. The dataset-level mean of the soft homophily, $\hsoftbar=|V|^{-1}\sum_v \hsoftv$, is a natural candidate since it is a deterministic function of $(X, A, f_\theta)$ and uses no labels. Concretely, we evaluate the rule
\begin{equation}
\mathrm{select}(\hsoftbar) = \begin{cases}
\HeadCP{}\text{-signed}, & \hsoftbar < 0.4, \\
\HeadCP{}\text{-edge},   & 0.4 \leq \hsoftbar < 0.6, \\
\HeadCP{}\text{-v3},     & \hsoftbar \geq 0.6.
\end{cases}
\label{eq:selector}
\end{equation}
The thresholds $(0.4, 0.6)$ are fixed across datasets (not tuned). On our benchmarks $\hsoftbar$ ranges only from $0.09$ (\texttt{Roman-Empire}) to $0.51$ (\texttt{pubmed}), well below the edge-homophily $h$ (e.g.\ $h{=}0.80$ on \texttt{pubmed}): confident same-class neighbors with top-1 mass $a$ contribute $\langle p_u,p_v\rangle \approx a^2$ to the dot product, not $1$, capping $\hsoftbar$ near $0.5$ for $K{=}3$ and $a\approx 0.7$. The gap is therefore structural to softmax-as-distribution, not a GNN deficiency. As a consequence, rule~\eqref{eq:selector} assigns signed to $9/10$ datasets and never activates v3, matching the row minimum of Table~\ref{tab:main} on only $3/10$ and recovering $66.8\%$ of the oracle gain over \DAPS{} (paired Wilcoxon $p=0.21$, not significant). Designing a calibrated selector (e.g.\ via held-out validation, by centering $\hsoftbar$ as $(\hsoftbar-1/K)/(1-1/K)$, or by combining it with $\mathrm{conf}_v$) is the main outstanding empirical question. Any such selector preserves coverage by Theorem~\ref{thm:coverage} as long as it uses no test labels, so Table~\ref{tab:main}'s post-hoc oracle upper-bounds what any label-free selector in this family can achieve.

\paragraph*{Limitations.}
Pseudo-label-based homophily is unreliable when GNN validation accuracy is low, but v3's soft confidence-gated estimator partially mitigates this. All hyperparameters and selector thresholds are fixed across datasets to avoid tuning bias. Data-adaptive choice, a head-to-head with \SNAPS{}~\cite{song2024similarity} and RR-GNN~\cite{zhang2025residual}, and an efficiency bound under the CSBM via weighted CP~\cite{tibshirani2019conformal} are natural next steps.

\section{Conclusion}\label{sec:conclusion}

We documented a previously unbenchmarked failure mode of the \DAPS{} graph-CP baseline that its Theorem~2 excludes by an explicit homophily assumption: on heterophilic graphs, \DAPS{} can worsen prediction-set efficiency by up to $10.6\%$ relative to plain \APS{}. \HeadCP{} replaces the uniform diffusion coefficient with a label-free node-wise quantity derived from the GNN softmax, yielding three coverage-preserving variants spanning the homophily spectrum. Whereas \DAPS{} hurts $6/10$ datasets relative to \APS{}, the \HeadCP{} family stays at or below \APS{} on every dataset, and the post-hoc oracle strictly improves over \DAPS{} on $8/10$ benchmarks at $p<0.01$. Designing a calibrated label-free selector that closes the gap to the oracle is the main outstanding empirical question.

\balance
\bibliographystyle{IEEEtran}
\bibliography{IEEEabrv,reference}

\end{document}